\documentclass[10pt]{article}
\usepackage{amsmath,amsfonts,amsthm,mathtools}
\usepackage[utf8]{inputenc}
\usepackage[T1]{fontenc}

\IfFileExists{newtxtext.sty}{
	\usepackage{newtxtext}
	\usepackage{newtxmath}
}{
	\IfFileExists{mathptmx.sty}{\usepackage{mathptmx}}{\usepackage{lmodern}}
}

\usepackage[margin=0.95in]{geometry}
\usepackage{microtype} 
\usepackage{booktabs,tabularx,array}
\usepackage{graphicx}
\usepackage[numbers,sort&compress]{natbib}
\usepackage{hyperref}
\usepackage{xcolor}
\usepackage{caption}
\usepackage{listings}
\usepackage{float}
\usepackage{tikz}
\usetikzlibrary{arrows.meta,positioning,calc} %

\hypersetup{
	colorlinks=true,
	linkcolor=blue!45!black,
	urlcolor=blue!45!black,
	citecolor=blue!45!black,
	hypertexnames=false
}

\newtheorem{definition}{Definition}[section]
\newtheorem{proposition}[definition]{Proposition}
\newtheorem{theorem}[definition]{Theorem}
\newtheorem{corollary}[definition]{Corollary}

\lstdefinelanguage{Lean}{
	morekeywords={import,namespace,end,def,theorem,lemma,variable,section,noncomputable,by,if,then,else,have,show,calc,exact,simpa,rw,let,forall},
	sensitive=true,
	morecomment=[l]--
}

\title{LPC-SM: Local Predictive Coding and Sparse Memory for Long-Context Language Modeling}

\author{
Keqin Xie\\
Independent Researcher, Suzhou, China\\
\href{mailto:xiekeqin30@gmail.com}{xiekeqin30@gmail.com}
}

\date{}

\begin{document}
\maketitle
\begin{abstract}
Most current long-context language models still rely on attention to handle both local interaction and long-range state, which leaves relatively little room to test alternative decompositions of sequence modeling. We propose LPC-SM, a hybrid autoregressive architecture that separates local attention, persistent memory, predictive correction, and run-time control within the same block, and we use Orthogonal Novelty Transport (ONT) to govern slow-memory writes. We evaluate a 158M-parameter model in three stages spanning base language modeling, mathematical continuation, and 4096-token continuation. Removing mHC raises the Stage-A final LM loss from 12.630 to 15.127, while adaptive sparse control improves the Stage-B final LM loss from 12.137 to 10.787 relative to a matched fixed-ratio continuation. The full route remains stable at sequence length 4096, where Stage C ends with final LM loss 11.582 and improves the delayed-identifier diagnostic from 14.396 to 12.031 in key cross-entropy. Taken together, these results show that long-context autoregressive modeling can be organized around a broader division of labor than attention alone.
\end{abstract}
\vspace{1em}
\noindent\textbf{Keywords:} long-context language modeling; sparse memory; predictive coding; recurrent memory; hybrid autoregressive models

\section{Introduction}
Transformer language models have been scaled with striking success, and most of that success has come from making attention broader, denser, cheaper, or easier to reuse. Even when recurrence, compression, or retrieval enters the picture, attention usually remains the place where the model is expected to reconcile nearby context with far-away state. That default is strong enough that alternative decompositions of sequence modeling are often treated as peripheral unless they already outperform a mature Transformer baseline. We think that order of judgment is too restrictive. Before asking whether another decomposition wins, it is worth asking whether it can be made coherent, trainable, and empirically legible in its own right \citep{gu2024transformersSSM,de2024griffin,behrouz2025titans,dong2024hymba,ding2024longrope,xiao2024infllm,zhang2025kimiLinear}.

LPC-SM starts from that narrower question. We keep local causal attention for what it already does well: short-range precision. We then give longer-lived state to a dual-timescale memory, expose representation mismatch through an explicit predictive-correction pathway, and let a small set of learned controllers regulate sparsity, memory writing, and stopping behavior. The point is not to remove attention from the model, nor to argue that a recurrent state should replace it wholesale. The point is to test whether these roles become easier to study once they are assigned to different mechanisms rather than folded back into a single attention-dominant block.

That choice makes the slow-memory write unusually important. If chunk summaries repeatedly move in directions the slow state already represents, the model spends write capacity on reinforcement rather than accumulation. ONT is our answer to that problem. It leaves the component already aligned with the slow state untouched and amplifies only the orthogonal novelty component before the write. From one angle, this is a small geometric modification. From another, it is the moment where the architecture commits to the idea that memory should preserve what is already there and spend additional capacity on what is genuinely new.

We evaluate LPC-SM at 158M parameters in three stages: base language modeling, mathematical continuation, and a 4096-token continuation run. The evidence at this scale is uneven in a useful way. mHC and adaptive sparse control show clear gains. Slow memory helps, though more modestly. Predictive coding, ONT, and learned stopping are not yet well summarized by base LM loss alone. That asymmetry is informative. It suggests that the architecture is already doing enough for individual mechanisms to become separable, even if not all of them have reached the regime in which their intended benefits are fully visible.

\section{Related Work}
The immediate backdrop for LPC-SM is still the Transformer family, including sparse and local variants \citep{vaswani2017attention,child2019sparse,sukhbaatar2019adaptiveSpan,beltagy2020longformer,zaheer2020bigbird}. Those models differ sharply in efficiency, receptive field, and memory footprint, yet they share a common structural assumption: context is still mediated mainly through attention. That is the baseline from which LPC-SM departs. We are not asking whether attention can be made cheaper; we are asking what changes once attention is no longer asked to be the only durable carrier of sequence state.

A second thread in the literature weakens that assumption by introducing persistent state. Transformer-XL, Compressive Transformer, recurrent memory transformers, RetNet, RWKV, and Mamba all treat recurrence or compressed state as something more than a cache optimization \citep{dai2019transformerxl,rae2019compressive,bulatov2022rmt,sun2023retnet,peng2023rwkv,gu2023mamba}. More recent systems such as Griffin, Titans, Hymba, and Mamba-2 go further in blending attention with recurrent or state-space components \citep{de2024griffin,behrouz2025titans,dong2024hymba,gu2024transformersSSM}. LPC-SM is close to this line of work in spirit. The difference is that we make two distinctions explicit that are often left implicit: fast versus slow memory, and local correction versus long-range storage.

Long-context extension methods form a neighboring but distinct literature. LongRoPE and InfLLM show that existing models can often be pushed well beyond their nominal context range through positional extrapolation or memory-assisted inference \citep{ding2024longrope,xiao2024infllm}. We see those methods as evidence that long-context behavior is still pliable. At the same time, they leave the internal division of labor mostly intact. LPC-SM addresses a different question. Instead of stretching an attention-centered architecture outward, we reassign some of the work inward, at the block level, before asking how far the context can be extended.

The predictive-correction path draws on predictive-coding ideas in neuroscience and machine learning \citep{rao1999predictive,friston2005cortical,lotter2016deep,whittington2020predictive}. Most language models let depth absorb mismatch implicitly: hidden states are updated, but the disagreement between a local explanation and the current representation is not itself exposed as a first-class quantity. We chose to expose it because mismatch seems like exactly the kind of signal that should interact with internal control. Once that signal is available, connections to adaptive computation and routing become natural \citep{graves2016act,dehghani2018universal,elbayad2019depthAdaptive,shazeer2017moe,fedus2021switch}, though LPC-SM does not use those ideas in the usual token-skipping or expert-selection form.

\section{Model}
Let $x_{1:T}$ be a token sequence and let $h_t^0$ denote its token-plus-position embedding at position $t$. LPC-SM applies $L$ identical autoregressive blocks followed by a final normalization and two output heads. Figures~\ref{fig:lpcsm-overview}, \ref{fig:lpcsm-block}, and \ref{fig:lpcsm-ont} illustrate the overall stack, the internal structure of a single block, and the ONT write used by the slow-memory pathway.

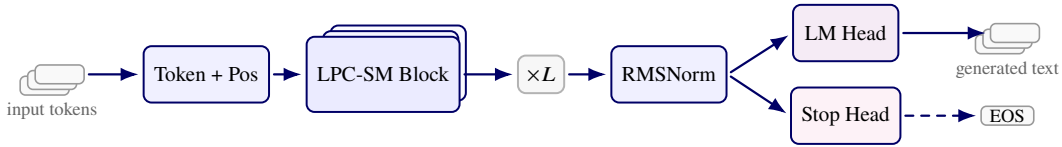
\begin{figure}[htbp]
	\centering
	\begin{tikzpicture}[
		>=Latex,
		line cap=round,
		line join=round,
		font=\footnotesize,
		module/.style={
			rounded corners=3pt,
			draw=blue!40!black, 
			line width=0.75pt,
			minimum height=0.75cm,
			align=center,
			inner sep=4pt
		},
		stack/.style={
			module,
			minimum height=1.00cm,
			minimum width=1.80cm
		},
		thinbox/.style={
			rounded corners=2pt,
			draw=black!40,
			line width=0.6pt,
			minimum height=0.25cm,
			minimum width=0.65cm,
			fill=gray!5,
			inner sep=1pt
		},
		flow/.style={
			->, 
			line width=0.85pt, 
			color=blue!40!black,
			shorten >=1.5pt, 
			shorten <=1.5pt
		}
		]
		
		\node[thinbox] (in1) at (-2.15, -0.15) {};
		\node[thinbox] (in2) at (-2.05, -0.075) {};
		\node[thinbox] (in3) at (-1.95, 0) {};
		\node[font=\scriptsize, text=gray!80!black, anchor=north] at (-2.05, -0.25) {input tokens};
		
		\node[module, fill=blue!6, minimum width=1.5cm] (tok) at (0, 0) {Token + Pos};
		
		\node[stack, fill=blue!2] at (2.55, 0.15) {};
		\node[stack, fill=blue!4] at (2.45, 0.075) {};
		\node[stack, fill=blue!8] (lpc) at (2.35, 0) {LPC-SM Block};
		
		\node[module, draw=black!30, fill=gray!6, minimum height=0.5cm, inner sep=3pt] (timesL) at (4.45, 0) {$\times L$};
		
		\node[module, fill=blue!6, minimum width=1.2cm] (norm) at (6.15, 0) {RMSNorm};
		
		\node[module, fill=violet!8, minimum width=1.4cm] (lmhead) at (8.5, 0.55) {LM Head};
		\node[module, fill=magenta!6, minimum width=1.4cm] (stophead) at (8.5, -0.55) {Stop Head};
		
		\node[thinbox] (out1) at (10.55, 0.40) {};
		\node[thinbox] (out2) at (10.65, 0.475) {};
		\node[thinbox] (out3) at (10.75, 0.55) {};
		\node[font=\scriptsize, text=gray!80!black, anchor=north] at (10.65, 0.30) {generated text};
		
		\node[thinbox, minimum width=0.7cm] (eos) at (10.65, -0.55) {\scriptsize EOS};
		
		\draw[flow] (in3) -- (tok);
		\draw[flow] (tok) -- (lpc);
		\draw[flow] (lpc) -- (timesL);
		\draw[flow] (timesL) -- (norm);
		
		\draw[flow] (norm.east) -- (lmhead.west);
		\draw[flow] (norm.east) -- (stophead.west);
		
		\draw[flow] (lmhead.east) -- (out3.west);
		\draw[flow, dashed] (stophead.east) -- (eos.west);
		
	\end{tikzpicture}
	\caption{Overall LPC-SM stack.}
	\label{fig:lpcsm-overview}
\end{figure}

\subsection{Block Structure}
Each block begins with RMSNorm \citep{zhang2019rmsnorm} and then combines three information sources at token position $t$: a local-attention read, a dual-timescale memory read, and a predictive correction. The local-attention path is windowed and causal,
\[
a_t = \mathrm{Attn}(h_{\max(1,t-w+1):t}),
\]
where $w$ is the local window. The goal of this path is local precision rather than long-range storage.

In the default configuration, queries, keys, and values are obtained from a shared linear projection. The implementation also supports a multi-head latent-attention variant in which
\[
q_t = W_q h_t,\qquad
z_t = W_z h_t,\qquad
k_t = W_k^\uparrow z_t,\qquad
v_t = W_v^\uparrow z_t,
\]
so that the key-value side is compressed through a latent bottleneck before being lifted back into head space. This option is not essential to the architecture, but it is part of the model family implemented in the code and allows us to vary how much of the local path is spent on explicit key-value bandwidth.

The memory path maintains a fast state updated every token and a slow state updated only at chunk boundaries. The fast state follows
\[
d_t = \sigma(W_d h_t), \qquad
u_t = \tanh(W_u h_t),
\]
\[
m_t^f = d_t \odot m_{t-1}^f + (1-d_t) \odot u_t.
\]
Separate gates query the fast and slow pathways,
\[
q_t^f = \sigma(W_{qf} h_t), \qquad q_t^s = \sigma(W_{qs} h_t),
\]
\[
r_t = W_r \big[ q_t^f \odot m_t^f \parallel q_t^s \odot m_{k-1}^s \big].
\]
This design gives the model a token-level recurrent trace and a chunk-level persistent state without forcing either one to replace local attention. 
The distinction matters because these two memories are not doing the same job at different timescales. The fast state remains close to tokenwise evidence, while the slow state only changes when a chunk has accumulated enough evidence to justify a write. Put differently, the architecture assumes that persistence should be selective, not continuous.

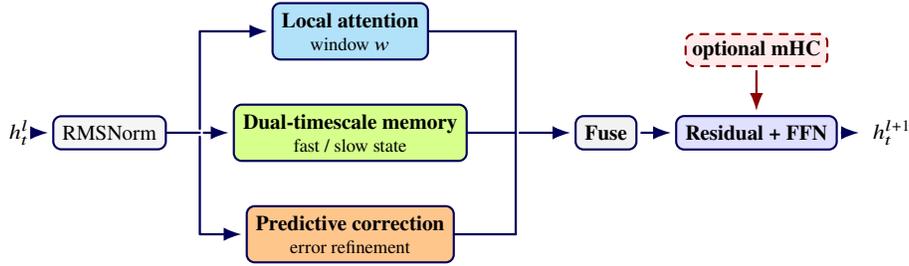
\begin{figure}[htbp]
	\centering
	\begin{tikzpicture}[
		>=Latex,
		line cap=round,
		line join=round,
		font=\footnotesize,
		wire/.style={line width=0.85pt, color=blue!35!black},
		flow/.style={->, line width=0.85pt, color=blue!35!black, shorten >=1.5pt, shorten <=1.5pt},
		module/.style={
			rounded corners=3pt,
			draw=blue!35!black,
			line width=0.8pt,
			align=center,
			inner sep=3.5pt
		},
		attn/.style={module, fill=cyan!28},
		mem/.style={module, fill=green!35!yellow!45},
		pred/.style={module, fill=orange!45},
		fuse/.style={module, fill=gray!10},
		res/.style={module, fill=blue!12},
		norm/.style={module, fill=gray!8},
		mhc/.style={module, dashed, draw=red!55!black, fill=red!8, inner sep=2.5pt}
		]
		
		\node (in) at (0, 0) {$h_t^l$};
		\node[norm] (norm) at (1.2, 0) {RMSNorm};
		
		\node[attn] (attn) at (4.4, 1.35) {\textbf{Local attention}\\[-0.1em]\scriptsize window $w$};
		\node[mem] (mem) at (4.4, 0) {\textbf{Dual-timescale memory}\\[-0.1em]\scriptsize fast / slow state};
		\node[pred] (pred) at (4.4, -1.35) {\textbf{Predictive correction}\\[-0.1em]\scriptsize error refinement};
		
		\node[fuse] (fuse) at (7.8, 0) {\textbf{Fuse}};
		\node[res] (res) at (9.8, 0) {\textbf{Residual + FFN}};
		\node[mhc] (mhc) at (9.8, 1.1) {\textbf{optional mHC}};
		\node (out) at (11.6, 0) {$h_t^{l+1}$};
		
		\draw[flow] (in) -- (norm);
		
		\coordinate (split) at (2.4, 0);
		\draw[wire] (norm) -- (split);
		
		\draw[flow] (split) |- (attn);
		\draw[flow] (split) -- (mem);
		\draw[flow] (split) |- (pred);
		
		\coordinate (mergeTop) at (6.6, 1.35);
		\coordinate (mergeBot) at (6.6, -1.35);
		\coordinate (mergeMid) at (6.6, 0);
		
		\draw[wire] (mergeTop) -- (mergeBot);
		
		\draw[wire] (attn) -- (mergeTop);
		\draw[wire] (mem) -- (mergeMid);
		\draw[wire] (pred) -- (mergeBot);
		
		\draw[flow] (mergeMid) -- (fuse);
		
		\draw[flow] (fuse) -- (res);
		\draw[->, line width=0.85pt, color=red!55!black, shorten >=1.5pt, shorten <=1.5pt] (mhc) -- (res);
		\draw[flow] (res) -- (out);
		
	\end{tikzpicture}
	\caption{A single LPC-SM block.}
	\label{fig:lpcsm-block}
\end{figure}

\subsection{Dual-Timescale Memory and ONT}
At the end of chunk $C_k$, the block forms a chunk summary
\[
c_k = \frac{1}{|C_k|}\sum_{t \in C_k} m_t^f.
\]
The slow-memory gate is input dependent,
\[
g_k = \sigma(W_g h_t).
\]
The question is how $c_k$ should be transported before it is written into the slow state. ONT defines the aligned component relative to the \textit{previous} slow state $m_{k-1}^s$ by
\[
\mathrm{proj}(c_k \mid m_{k-1}^s)
=
\begin{cases}
	0, & \text{if } \|m_{k-1}^s\|_2 = 0,\\[6pt]
	\dfrac{c_k^\top m_{k-1}^s}{\|m_{k-1}^s\|_2^2} m_{k-1}^s, & \text{otherwise,}
\end{cases}
\]
and the novelty component by
\[
n_k = c_k - \mathrm{proj}(c_k \mid m_{k-1}^s).
\]
The transported summary is
\[
c_k^\star = c_k + \alpha_n n_k,
\]
where $\alpha_n \ge 0$ is the novelty coefficient. The slow-memory update then follows:
\[
\tilde{u}_k = \tanh(W_c c_k^\star),
\qquad
m_k^s = g_k \odot m_{k-1}^s + (1-g_k) \odot \tilde{u}_k.
\]
During autoregressive generation, the same write rule operates on a per-layer cache that keeps the attention history, fast state, slow state, and the running partial chunk summary. That choice is easy to overlook, but it matters. A slow-memory mechanism can look appealing on paper and still drift into a train-inference mismatch if prompt-side partial chunks are treated differently from training chunks. We therefore keep the write order aligned across the two regimes as closely as possible.

\begin{figure}[htbp]
	\centering
	\begin{tikzpicture}[
		>=Latex,
		line cap=round,
		line join=round,
		font=\footnotesize,
		box/.style={
			rounded corners=3pt,
			draw=blue!35!black,
			line width=0.8pt,
			align=center,
			inner sep=3.5pt 
		},
		chunk/.style={box, fill=yellow!28},
		state/.style={box, fill=gray!8},
		op/.style={box, fill=gray!8},
		proj/.style={box, fill=blue!12},
		nov/.style={box, fill=green!18},
		transport/.style={box, fill=gray!8}
		]

		\node[chunk] (N1) at (-1.8, 2.6) {$c_k$\\[-0.1em]\scriptsize chunk summary};
		\node[state] (N2) at (1.8, 2.6) {$m_k^s$\\[-0.1em]\scriptsize slow state};
		
		\node[op] (N3) at (0, 1.4) {\textbf{decompose}\\[-0.1em]\scriptsize along $m_k^s$};
		
		\node[proj] (N4) at (-1.5, 0.2) {$P_m(c_k)$\\[-0.1em]\scriptsize aligned};
		\node[nov] (N5) at (1.5, 0.2) {$N_m(c_k)$\\[-0.1em]\scriptsize novelty};
		
		\node[transport] (N6) at (0, -0.8) {$c_k^\star = P + (1+\alpha_n)N$};
		
		\node[state] (N7) at (0, -1.7) {\textbf{slow write}};

		\begin{scope}[->, line width=0.8pt, color=blue!35!black, shorten >=1.5pt, shorten <=1.5pt]
			\draw (N1) -- (N3);
			\draw (N2) -- (N3);
			\draw (N3) -- (N4);
			\draw (N3) -- (N5);
			\draw (N4) -- (N6);
			\draw (N5) -- (N6);
			\draw (N6) -- (N7);
		\end{scope}
		
	\end{tikzpicture}
	\caption{Orthogonal Novelty Transport for slow-memory writes.}
	\label{fig:lpcsm-ont}
\end{figure}
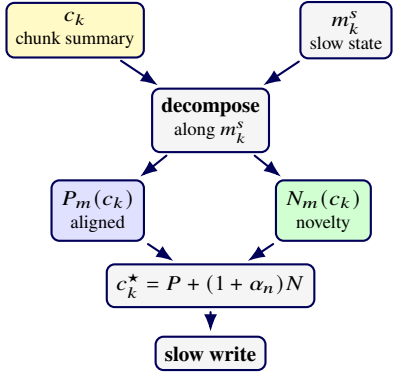

\subsection{Correction and Stopping}
The block also predicts the current hidden state from local context and memory and then corrects that prediction with an explicit mismatch signal. 

The predictor is initialized by
\[
\hat{h}_t^{(0)} = f_{\mathrm{pred}} \big( [ a_t \parallel r_t ] \big),
\]
and refined iteratively:
\[
\hat{h}_t^{(s+1)} = \hat{h}_t^{(s)} + f_{\mathrm{refine}} \big( [ a_t \parallel r_t \parallel h_t - \hat{h}_t^{(s)} ] \big).
\]

LPC-SM does not force this pathway to act uniformly at every token. Instead, a learned controller converts error statistics into a sparse event mask through score normalization, a learned bias-scale transform, a temperature, and a straight-through hard threshold. The sparse ratio itself remains learnable within prescribed bounds. That detail is central to how we interpret the controller: the architecture is not merely sparsified; it is allowed to choose how sparse it wants to be within a bounded regime.

The same model family also contains an optional multi-head-coupled residual router (mHC), following the hyper-connection view developed in \citep{xie2025mhc}. Our use is narrower than the full formulation in that work: here mHC sits inside each LPC-SM block as a residual transport layer rather than as a global system-level redesign. Rather than forwarding a single hidden stream directly through the block, mHC lifts the state into multiple streams, learns pre-mixing weights, applies a Sinkhorn-normalized residual transport across streams, and then injects the updated block output back through learned post-mixing coefficients. What we found empirically is that this is not a cosmetic addition. At 158M parameters, mHC is the mechanism whose removal hurts most. That makes it more natural to read mHC as part of the core block geometry than as an optional embellishment.

\subsection{Training Objective}
The training objective combines next-token prediction with auxiliary terms for predictive correction, sparsity, memory magnitude, and stopping:
\[
\begin{aligned}
L
={}&
L_{\mathrm{lm}}
+
\lambda_{\mathrm{pred}} L_{\mathrm{pred}}
+
\lambda_{\mathrm{sparse}} L_{\mathrm{sparse}} \\
&+
\lambda_{\mathrm{mem}} L_{\mathrm{mem}}
+
\lambda_{\mathrm{stop}} L_{\mathrm{stop}}.
\end{aligned}
\]
The language-model term is standard cross-entropy. The auxiliary terms are there for a narrower reason: they keep the explicit mechanisms from becoming inert. Predictive correction is penalized through its mismatch signal, sparsity is regularized so the controller cannot drift arbitrarily, memory magnitude is constrained to keep the recurrent path from dominating by scale alone, and the stop head is nudged toward EOS-sensitive behavior. We do not claim that this objective is the only reasonable one. We claim something smaller: once the architecture exposes correction, sparsity, memory, and stopping as explicit state variables, it is natural to let the training objective acknowledge them rather than pretend they are invisible.

\section{Experimental Setup}
\subsection{Staged Training Program}
All experiments reported here use the same 158,313,241-parameter model with a GPT-2 tokenizer. Training proceeds in three stages summarized in Table~\ref{tab:stages}.

\begin{table}[htbp]
\centering
\small
\caption{Staged training schedule for the 158M study.}
\label{tab:stages}
\begin{tabular}{lrrr}
\toprule
Stage & Corpus & Tokens & Seq. \\
\midrule
A & Dolma3-base & 32.77M & 2048 \\
B & OpenWebMath-10k & 16.38M & 2048 \\
C & LongMino continuation & 24.58M & 4096 \\
\bottomrule
\end{tabular}
\end{table}

We evaluate LPC-SM across base modeling, mathematical continuation, and longer-context continuation under a common training pipeline. The staged schedule is not only a convenience for limited compute. It also separates three questions that would be entangled in a single monolithic run: whether the architecture can serve as a base language model at all, whether its internal control signals remain useful when the domain shifts toward mathematics, and whether the same route survives a substantial increase in effective context length. Keeping these stages explicit makes the later comparisons easier to interpret.

Given the parameter count of 158M, the token budgets in Stages A and B (32.77M and 16.38M, respectively) place the model in a significant underfitting regime relative to standard scaling laws \citep{kaplan2020scaling,hoffmann2022chinchilla}. We treat these runs as a proof-of-concept study of structural emergence, numerical stability, and functional interaction among the LPC-SM modules rather than as a compute-optimal perplexity target. The small-scale setting is useful precisely because it makes the early behavior of the controllers and the ONT pathway easier to inspect.

\subsection{Experimental Design}
Stage A includes the full model together with five ablations: slow memory removed, predictive coding removed, ONT disabled, stop head removed, and mHC removed. This stage is designed to answer a narrow architectural question: which mechanisms affect the optimization behavior of the base model under a fixed parameter budget? Stage B compares adaptive sparse control against a fixed sparse-ratio control initialized from the same Stage-A checkpoint and trained on the same data for the same budget. Because the initialization and continuation corpus are matched, the comparison isolates the value of learned control more cleanly than a comparison across independently trained models would. Stage C extends the full route to sequence length 4096. Here the emphasis is not on winning against another method, but on whether the full architecture remains trainable once longer-sequence recurrence, chunked writes, and explicit correction are exercised together.

\subsection{Evaluation Criteria}
We use final LM loss at the end of each stage as the primary metric. We also report training throughput, the learned sparse ratio, and fixed-prompt sanity checks. Final LM loss is not intended to summarize every design goal of LPC-SM. In particular, predictive correction, ONT, and the stop head are meant to affect behavior that is only partially visible through base pretraining loss. We nevertheless treat final loss as the most stable common measure across all reported runs, and we interpret the ablations with that limitation in mind rather than reading every gain or loss as a full verdict on the underlying mechanism.

\section{Results}
Tables~\ref{tab:stage-a}, \ref{tab:continuation}, and \ref{tab:passkey} summarize the 158M study.

\subsection{Stage-A Ablations}
\begin{table}[htbp]
\centering
\small
\caption{Stage-A ablations at 158M parameters. Lower is better.}
\label{tab:stage-a}
\begin{tabular}{lrrrr}
\toprule
Variant & Final LM & $\Delta$ (\%) & Tok/s & Final ratio \\
\midrule
Full LPC-SM & 12.630 & 0.000 & 6798 & 0.226 \\
w/o slow memory & 12.671 & +0.320 & 21938 & 0.249 \\
w/o predictive coding & 12.413 & -1.719 & 7083 & 0.600 \\
w/o ONT & 11.781 & -6.724 & 6676 & 0.235 \\
w/o stop head & 12.078 & -4.377 & 6630 & 0.228 \\
w/o mHC & 15.127 & +19.764 & 7038 & 0.234 \\
\bottomrule
\end{tabular}
\end{table}

Stage A separates the block into mechanisms that affect optimization in visibly different ways. The clearest regression comes from removing mHC: the final LM loss rises from 12.630 to 15.127, which is large enough to treat residual routing as part of the effective core block rather than as an optional refinement. Removing slow memory changes the final loss only slightly, but the direction is still unfavorable to the ablation. That is a weaker signal than the mHC result, yet it is consistent with the claim that the recurrent path is doing some useful work even before the model has been trained at larger scale.

The remaining ablations are less straightforward. Removing predictive coding, ONT, or the stop head lowers the base-stage LM loss. In the present regime, we interpret that result cautiously. The model is strongly undertrained relative to its parameter count, and several LPC-SM components are not designed primarily to improve short-budget next-token loss. Predictive correction, novelty-constrained transport, and stopping all bias the internal organization of the model toward behaviors whose payoff is more likely to appear under continuation, longer-range conditioning, or downstream adaptation than in the most immediate Stage-A metric.

\subsection{Continuation Results}
\begin{table}[htbp]
\centering
\small
\caption{Continuation runs at 158M parameters.}
\label{tab:continuation}
\begin{tabular}{llrrrrr}
\toprule
Stage & Variant & Seq. & Final LM & $\Delta$ (\%) & Tok/s & Final ratio \\
\midrule
B & Adaptive sparse control & 2048 & 10.787 & 0.000 & 6728 & 0.214 \\
B & Fixed sparse control & 2048 & 12.137 & +12.517 & 6711 & 0.226 \\
C & Adaptive long-context continuation & 4096 & 11.582 & -- & 3711 & 0.215 \\
\bottomrule
\end{tabular}
\end{table}

Stage B provides the clearest evidence that the internal controller is doing substantive work rather than merely tracking training noise. The adaptive run improves final LM loss by 12.5\% relative to the matched fixed-ratio control while holding initialization, data, and training budget constant. Because the two runs differ only in whether the sparse ratio remains learnable, the comparison isolates the role of adaptive control more cleanly than the broader Stage-A ablations. In this setting, allowing the controller to move appears to help the model rebalance computation as the continuation domain shifts from general text to mathematics.

Stage C addresses a different question. Here the issue is not whether the controller beats a fixed alternative, but whether the full route remains trainable once the sequence length doubles from 2048 to 4096. The run completes without removing the memory pathway, predictive correction, routing, or learned control, and it finishes with final LM loss 11.582. From the perspective of architecture validation, this matters because a hybrid block can look reasonable at short sequence lengths yet become brittle once recurrence, chunked writes, and longer continuation interact. That behavior does not appear in the present run.

\begin{table}[htbp]
\centering
\small
\caption{Diagnostic delayed-identifier probe. Lower key cross-entropy is better. The probe uses a header that introduces a key, a long distractor region, and a trigger prefix that asks the model to continue the identifier. Stage-A values use six 2040-token prompts; the Stage-C column scores the same probe after long-context continuation.}
\label{tab:passkey}
\begin{tabular}{lrrrr}
\toprule
Probe & Stage-A full & w/o slow memory & w/o ONT & Stage-C full \\
\midrule
Delayed identifier CE & 14.396 & 13.865 & 15.427 & 12.031 \\
\bottomrule
\end{tabular}
\end{table}

Table~\ref{tab:passkey} adds a more diagnostic view of long-range conditioning than free generation does at this stage. Instead of asking the model to produce an answer from scratch, we score the cross-entropy of the correct delayed identifier under teacher forcing. Two patterns stand out. First, disabling ONT worsens the diagnostic relative to the full model, which is consistent with the idea that novelty-aware writes help preserve delayed information. Second, the full model improves substantially after Stage C continuation, with the delayed-identifier cross-entropy falling from 14.396 to 12.031 on the same probe family. The comparison with the slow-memory ablation remains mixed: at 158M, the no-memory variant is still slightly better on this probe than the full Stage-A model. For that reason, we do not read Table~\ref{tab:passkey} as a definitive isolation of the slow-memory mechanism. We read it as evidence that long-context continuation sharpens delayed conditioning in the full route, while the contribution of individual memory components is not yet cleanly separated at the present scale.

\section{Conclusion}
We introduced LPC-SM as a long-context autoregressive architecture that separates local attention, persistent memory, predictive correction, and internal control within the same block. Across the 158M study, the strongest empirical support comes from mHC and adaptive sparse control. Removing mHC causes the largest Stage-A regression, while adaptive sparse control clearly outperforms a matched fixed-ratio continuation in Stage B. The full route also remains stable when the sequence length doubles to 4096 in Stage C.

The evidence for the memory pathway is more qualified. Slow memory is compatible with stable long-context continuation, and the delayed-identifier probe improves materially after Stage C training, with the key cross-entropy falling from 14.396 to 12.031. At the same time, the 158M ablations do not yet isolate a uniformly positive effect for every memory-related component under every metric. That is the main reason we treat the present paper as an architecture validation study rather than as a claim of compute-optimal superiority.

Within that scope, the results are still meaningful. They show that the LPC-SM decomposition can be trained end to end, that several of its internal control mechanisms matter measurably, and that longer-context continuation sharpens delayed conditioning in the full model. Larger 1B-scale runs are currently in progress.

\appendix

\section{Mathematical Properties of ONT}
We state here the mathematical properties corresponding to the ONT write used in the implementation. The underlying geometry is the standard orthogonal decomposition of a vector into components parallel and orthogonal to a reference direction \citep{meyer2000matrix}, specialized here to the slow-memory write rule used by LPC-SM.

\begin{definition}[ONT projection, novelty, and transport]
Let $E$ be a real inner-product space. For $c, m \in E$ and $\alpha \in \mathbb{R}$, define
\[
P_m(c) =
\begin{cases}
0, & \text{if } m = 0,\\[4pt]
\dfrac{\langle c, m \rangle}{\|m\|^2} m, & \text{if } m \neq 0,
\end{cases}
\qquad
N_m(c) = c - P_m(c),
\]
and
\[
T_\alpha(c,m) = c + \alpha N_m(c).
\]
\end{definition}

\begin{definition}[Comparison target and feasible set]
For $c \in E$ and $\alpha \in \mathbb{R}$, define the comparison target
\[
Y_\alpha(c) = (1+\alpha)c.
\]
For $c, m \in E$, define the feasible affine set
\[
\mathcal{A}(c,m) = \{x \in E : \langle x, m \rangle = \langle c, m \rangle \}.
\]
\end{definition}

\begin{proposition}[Basic decomposition and aligned gap]
\label{prop:ont-basic}
For every $c, m \in E$ and $\alpha \in \mathbb{R}$,
\[
c = P_m(c) + N_m(c),
\qquad
\langle N_m(c), m \rangle = 0,
\]
and
\[
T_\alpha(c,m) = P_m(c) + (1+\alpha)N_m(c).
\]
Moreover,
\[
\begin{aligned}
Y_\alpha(c) &= T_\alpha(c,m) + \alpha P_m(c),\\
T_\alpha(c,m) - Y_\alpha(c) &= -\alpha P_m(c).
\end{aligned}
\]
\end{proposition}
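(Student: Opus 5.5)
The plan is a direct verification from the definitions of $P_m$, $N_m$, $T_\alpha$ and $Y_\alpha$, with the only case split being $m = 0$ versus $m \neq 0$ in the definition of $P_m$.

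\textbf{Decomposition.} The identity $c = P_m(c) + N_m(c)$ holds by the definition $N_m(c) = c - P_m(c)$, with no case analysis.

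\textbf{Orthogonality.} I split into the two cases. If $m = 0$, then $\langle N_m(c), m\rangle = \langle c, 0\rangle = 0$ trivially. If $m \neq 0$, I use bilinearity of the inner product and $\langle m, m\rangle = \|m\|^2 \neq 0$:
\[
\langle N_m(c), m\rangle = \langle c,m\rangle - \frac{\langle c,m\rangle}{\|m\|^2}\langle m,m\rangle = 0 .
\]
This is the only step that needs the hypothesis $m \neq 0$, to justify the division. It is also the only place where any care is required, and I expect no genuine obstacle.

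\textbf{Transport formula.} I substitute $c = P_m(c) + N_m(c)$ into $T_\alpha(c,m) = c + \alpha N_m(c)$, which gives
\[
T_\alpha(c,m) = P_m(c) + (1+\alpha) N_m(c).
\]

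\textbf{Relation to the comparison target.} I expand
\[
Y_\alpha(c) = (1+\alpha)c = (1+\alpha)P_m(c) + (1+\alpha)N_m(c).
\]
Comparing with the transport formula, the difference is exactly $\alpha P_m(c)$. This yields $Y_\alpha(c) = T_\alpha(c,m) + \alpha P_m(c)$, and rearranging gives $T_\alpha(c,m) - Y_\alpha(c) = -\alpha P_m(c)$.

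All of these steps are linear-algebraic rewritings valid for every real $\alpha$, so no sign condition on $\alpha$ is needed.
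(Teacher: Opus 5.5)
Your proposal is correct and follows the paper's proof essentially line for line. The steps match in order: the decomposition from the definition of $N_m$, orthogonality via the $m=0$ versus $m\neq 0$ case split, substituting $c = P_m(c)+N_m(c)$ into $T_\alpha(c,m)$, and expanding $Y_\alpha(c)=(1+\alpha)c$ before subtracting to obtain the aligned gap $-\alpha P_m(c)$.
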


\begin{proof}
The identity $c = P_m(c) + N_m(c)$ is immediate from the definition of $N_m(c)$. When $m=0$, the orthogonality claim reduces to $\langle c,0\rangle=0$. When $m \neq 0$, $P_m(c)$ is the usual projection of $c$ onto the one-dimensional subspace spanned by $m$, so
\[
\langle N_m(c),m\rangle
=
\langle c - P_m(c),m\rangle
=
\langle c,m\rangle - \frac{\langle c,m\rangle}{\|m\|^2}\langle m,m\rangle
=0.
\]
Substituting $c = P_m(c) + N_m(c)$ into $T_\alpha(c,m)=c+\alpha N_m(c)$ gives
\[
T_\alpha(c,m)=P_m(c)+(1+\alpha)N_m(c).
\]
Likewise,
\[
Y_\alpha(c)
=(1+\alpha)\bigl(P_m(c)+N_m(c)\bigr)
=T_\alpha(c,m)+\alpha P_m(c),
\]
and the final identity follows by subtraction.
\end{proof}

\begin{proposition}[Feasibility]
\label{prop:ont-feasible}
For every $c,m \in E$ and $\alpha \in \mathbb{R}$, the ONT transport satisfies
\[
\langle T_\alpha(c,m), m \rangle = \langle c, m \rangle.
\]
Equivalently, $T_\alpha(c,m) \in \mathcal{A}(c,m)$.
\end{proposition}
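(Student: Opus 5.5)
The plan is to read the claim off Proposition~\ref{prop:ont-basic} together with bilinearity of the inner product; no case split beyond what that proposition already handles should be needed. By definition $T_\alpha(c,m) = c + \alpha N_m(c)$. Pairing with $m$ and using linearity in the first argument gives
\[
\langle T_\alpha(c,m), m\rangle = \langle c, m\rangle + \alpha \langle N_m(c), m\rangle .
\]

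The one nontrivial input is the orthogonality $\langle N_m(c), m\rangle = 0$. This was established in Proposition~\ref{prop:ont-basic} for every $c,m$ and covers both branches. When $m=0$ it is trivially $\langle c, 0\rangle = 0$. When $m\neq 0$ it is the standard projection computation.

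Substituting this into the display kills the second term, so $\langle T_\alpha(c,m), m\rangle = \langle c,m\rangle$ for all $\alpha \in \mathbb{R}$. The equivalent formulation $T_\alpha(c,m)\in\mathcal{A}(c,m)$ then follows directly from the definition of the feasible affine set.

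I do not expect a genuine obstacle here. The only point needing care is that the degenerate case $m=0$ does not break anything, and it does not: there the constraint defining $\mathcal{A}(c,0)$ reads $0=0$, so $\mathcal{A}(c,0)=E$ and the statement holds trivially. That case is in any event already absorbed by the orthogonality clause of Proposition~\ref{prop:ont-basic}.

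One could alternatively argue from the form $T_\alpha(c,m)=P_m(c)+(1+\alpha)N_m(c)$ together with $\langle P_m(c),m\rangle=\langle c,m\rangle$. The direct route above is shorter, so I would use it.
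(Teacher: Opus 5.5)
Your proposal is correct and matches the paper's proof: expand $\langle c+\alpha N_m(c), m\rangle$ by linearity in the first argument, then remove the second term with the orthogonality $\langle N_m(c), m\rangle = 0$ from Proposition~\ref{prop:ont-basic}. Your remark that $\mathcal{A}(c,0)=E$ is also right, though it is not needed, since the orthogonality clause already covers $m=0$.
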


\begin{proof}
By definition,
\[
\langle T_\alpha(c,m),m\rangle
=
\langle c + \alpha N_m(c),m\rangle
=
\langle c,m\rangle + \alpha \langle N_m(c),m\rangle.
\]
The orthogonality result from Proposition~\ref{prop:ont-basic} implies $\langle N_m(c),m\rangle=0$, hence
\[
\langle T_\alpha(c,m),m\rangle=\langle c,m\rangle.
\]
\end{proof}

\begin{theorem}[ONT is the constrained minimizer]
\label{thm:ont-minimal}
Let $E$ be a real inner-product space, let $c,m \in E$, and let $\alpha \in \mathbb{R}$. Then for every $x \in \mathcal{A}(c,m)$,
\[
\|x - Y_\alpha(c)\|^2
=
\|x - T_\alpha(c,m)\|^2
+
\|T_\alpha(c,m) - Y_\alpha(c)\|^2.
\]
Consequently,
\[
\|T_\alpha(c,m) - Y_\alpha(c)\|
\leq
\|x - Y_\alpha(c)\|
\qquad
\text{for all } x \in \mathcal{A}(c,m).
\]
Thus $T_\alpha(c,m)$ is a minimizer of
\[
\min\{\|x - Y_\alpha(c)\| : x \in \mathcal{A}(c,m)\}.
\]
\end{theorem}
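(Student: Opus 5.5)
The plan is to reduce the identity to a Pythagorean decomposition: for $x \in \mathcal{A}(c,m)$, write
\[
x - Y_\alpha(c) = \bigl(x - T_\alpha(c,m)\bigr) + \bigl(T_\alpha(c,m) - Y_\alpha(c)\bigr),
\]
expand the squared norm, and show that the cross term $\langle x - T_\alpha(c,m),\, T_\alpha(c,m) - Y_\alpha(c)\rangle$ vanishes. The inequality and the minimizer statement then follow at once. The squared term $\|x - T_\alpha(c,m)\|^2$ is nonnegative, and $T_\alpha(c,m)$ itself lies in $\mathcal{A}(c,m)$ by Proposition~\ref{prop:ont-feasible}, so it is an admissible point that attains the bound.

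For the cross term I will use the two earlier results. By Proposition~\ref{prop:ont-basic}, $T_\alpha(c,m) - Y_\alpha(c) = -\alpha P_m(c)$. By Proposition~\ref{prop:ont-feasible}, $\langle T_\alpha(c,m), m\rangle = \langle c,m\rangle$. Combined with the hypothesis $\langle x,m\rangle = \langle c,m\rangle$, this gives $\langle x - T_\alpha(c,m), m\rangle = 0$.

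I then split on whether $m$ vanishes.
\begin{itemize}
\item If $m = 0$, then $P_m(c) = 0$, so the second vector is zero and the cross term is trivially zero. In this case $\mathcal{A}(c,m) = E$, $T_\alpha(c,m) = Y_\alpha(c)$, and the identity is immediate.
\item If $m \neq 0$, then $P_m(c)$ is the scalar multiple $\frac{\langle c,m\rangle}{\|m\|^2} m$. The cross term is therefore $-\alpha\frac{\langle c,m\rangle}{\|m\|^2}\langle x - T_\alpha(c,m), m\rangle$, which is $0$ by the previous paragraph.
\end{itemize}

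I do not expect a genuine obstacle. The only point needing care is to state the $m=0$ case explicitly, since $P_m$ is defined piecewise and the "multiple of $m$" argument requires $m \neq 0$. With that handled, expanding $\|u+v\|^2 = \|u\|^2 + 2\langle u,v\rangle + \|v\|^2$ with $\langle u,v\rangle = 0$ yields the stated identity. Taking square roots of the resulting inequality $\|T_\alpha(c,m) - Y_\alpha(c)\|^2 \le \|x - Y_\alpha(c)\|^2$ gives the norm form.
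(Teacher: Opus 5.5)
Your proposal is correct and follows the paper's proof essentially step for step: the same splitting $x - Y_\alpha(c) = (x - T_\alpha(c,m)) + (T_\alpha(c,m) - Y_\alpha(c))$, the same use of $T_\alpha(c,m) - Y_\alpha(c) = -\alpha P_m(c)$ together with $\langle x - T_\alpha(c,m), m\rangle = 0$ to make the cross term vanish, and then the Pythagorean theorem. Your explicit case split on $m = 0$ simply spells out the paper's remark that $P_m(c)$ is either zero or a scalar multiple of $m$.
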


\begin{proof}
Fix $x \in \mathcal{A}(c,m)$. Since both $x$ and $T_\alpha(c,m)$ lie in $\mathcal{A}(c,m)$, we have
\[
\langle x - T_\alpha(c,m), m \rangle = 0.
\]
By Proposition~\ref{prop:ont-basic}, $P_m(c)$ is either zero or a scalar multiple of $m$, hence
\[
\langle x - T_\alpha(c,m), P_m(c) \rangle = 0.
\]
Using the identity
\[
T_\alpha(c,m) - Y_\alpha(c) = -\alpha P_m(c),
\]
we obtain
\[
\langle x - T_\alpha(c,m), T_\alpha(c,m) - Y_\alpha(c) \rangle = 0.
\]
Now decompose
\[
x - Y_\alpha(c)
=
\bigl(x - T_\alpha(c,m)\bigr)
+
\bigl(T_\alpha(c,m) - Y_\alpha(c)\bigr).
\]
The two summands are orthogonal, so the Pythagorean theorem yields
\[
\|x - Y_\alpha(c)\|^2
=
\|x - T_\alpha(c,m)\|^2
+
\|T_\alpha(c,m) - Y_\alpha(c)\|^2.
\]
Since the first term on the right-hand side is nonnegative, the minimality inequality follows immediately.
\end{proof}

\begin{corollary}[Uniqueness]
\label{cor:ont-unique}
Under the assumptions of Theorem~\ref{thm:ont-minimal}, $T_\alpha(c,m)$ is the unique minimizer of
\[
\min\{\|x - Y_\alpha(c)\| : x \in \mathcal{A}(c,m)\}.
\]
\end{corollary}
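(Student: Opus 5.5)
The plan is to read uniqueness directly off the Pythagorean identity in Theorem~\ref{thm:ont-minimal}. First, Proposition~\ref{prop:ont-feasible} gives $T_\alpha(c,m) \in \mathcal{A}(c,m)$, so $T_\alpha(c,m)$ is itself a feasible point. Theorem~\ref{thm:ont-minimal} then says it attains the minimum value
\[
\mu = \|T_\alpha(c,m) - Y_\alpha(c)\|.
\]
What remains is to show that no other feasible point attains this value.

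Take any minimizer $x \in \mathcal{A}(c,m)$, so that $\|x - Y_\alpha(c)\| = \mu$. Substitute this into the identity of Theorem~\ref{thm:ont-minimal}:
\[
\|x - Y_\alpha(c)\|^2 = \|x - T_\alpha(c,m)\|^2 + \|T_\alpha(c,m) - Y_\alpha(c)\|^2 .
\]
The left-hand side equals $\mu^2$, and the second term on the right also equals $\mu^2$. Hence $\|x - T_\alpha(c,m)\|^2 = 0$. Positive definiteness of the inner product then forces $x = T_\alpha(c,m)$, which gives uniqueness.

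There is no real obstacle here. The only point needing care is that the argument uses nothing beyond the inner-product axioms, so it holds in an arbitrary real inner-product space with no completeness or finite-dimensionality assumption. It also covers the degenerate cases uniformly. When $m = 0$, the set $\mathcal{A}(c,m)$ is all of $E$ and $P_m(c) = 0$, so $T_\alpha(c,m) = Y_\alpha(c)$ and the minimum value is $0$. When $\alpha = 0$, the identity still applies unchanged. Neither case needs separate treatment.
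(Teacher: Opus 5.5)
Your proof is correct and follows the paper's argument: you substitute the minimizer equality $\|x - Y_\alpha(c)\| = \|T_\alpha(c,m) - Y_\alpha(c)\|$ into the Pythagorean identity of Theorem~\ref{thm:ont-minimal}, which gives $\|x - T_\alpha(c,m)\|^2 = 0$ and hence $x = T_\alpha(c,m)$. Your side remarks are accurate (feasibility of $T_\alpha(c,m)$, and the degenerate cases $m=0$ and $\alpha=0$) but not needed, since the theorem already supplies them.
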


\begin{proof}
Suppose $x \in \mathcal{A}(c,m)$ is also a minimizer. Then
\[
\|x - Y_\alpha(c)\|
=
\|T_\alpha(c,m) - Y_\alpha(c)\|.
\]
Substituting this equality into the identity from Theorem~\ref{thm:ont-minimal} gives
\[
\|x - T_\alpha(c,m)\|^2 = 0.
\]
Hence $\|x - T_\alpha(c,m)\|=0$, so $x=T_\alpha(c,m)$.
\end{proof}

\begin{corollary}[Hilbert-space generalization]
\label{cor:ont-hilbert}
Let $H$ be a real Hilbert space. For every $c,m \in H$ and every $\alpha \in \mathbb{R}$, the ONT transport $T_\alpha(c,m)$ is the unique minimizer of
\[
\min\{\|x - (1+\alpha)c\| : x \in H,\ \langle x,m\rangle = \langle c,m\rangle\}.
\]
\end{corollary}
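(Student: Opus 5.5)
The plan is to reduce Corollary~\ref{cor:ont-hilbert} to the results already proved for an arbitrary real inner-product space. A real Hilbert space $H$ is in particular a real inner-product space; completeness is an extra property we will not need. So we take $E = H$ in the definitions of $P_m$, $N_m$, $T_\alpha$, $Y_\alpha$ and $\mathcal{A}(c,m)$.

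Next we unpack the optimization problem in the statement and match it to the earlier objects. The objective $\|x-(1+\alpha)c\|$ is exactly $\|x - Y_\alpha(c)\|$ by the definition of the comparison target. The constraint set $\{x \in H : \langle x,m\rangle = \langle c,m\rangle\}$ is exactly $\mathcal{A}(c,m)$. Hence the problem is literally the one in Theorem~\ref{thm:ont-minimal} and Corollary~\ref{cor:ont-unique}, specialized to $E=H$.

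We then assemble the argument in three short steps.
\begin{itemize}
\item Feasibility: Proposition~\ref{prop:ont-feasible} gives $T_\alpha(c,m)\in\mathcal{A}(c,m)$, so the candidate lies in the constraint set.
\item Minimality: Theorem~\ref{thm:ont-minimal} shows that $T_\alpha(c,m)$ attains the minimum, via the Pythagorean identity.
\item Uniqueness: Corollary~\ref{cor:ont-unique} shows that any other feasible minimizer coincides with $T_\alpha(c,m)$.
\end{itemize}
All of these hold for every $c,m$ and every real $\alpha$, including the degenerate case $m=0$. There the constraint set is all of $H$, $P_m(c)=0$, and $T_\alpha(c,m) = Y_\alpha(c)$, so the minimum value is zero. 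This case is already covered by the general proofs, but we will note it explicitly as a sanity check.

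We expect no real obstacle here. The only point to flag is that no projection-theorem or completeness argument is needed. The minimizer is exhibited explicitly and the orthogonality identity is purely algebraic, which is why the statement extends verbatim from inner-product spaces to Hilbert spaces.
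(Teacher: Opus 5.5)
Your proposal is correct and matches the paper's proof. Both observe that Proposition~\ref{prop:ont-feasible}, Theorem~\ref{thm:ont-minimal} and Corollary~\ref{cor:ont-unique} use only the real inner-product axioms, so the result transfers to $H$ by specialization without any appeal to completeness; your explicit identification of the objective with $\|x - Y_\alpha(c)\|$ and of the constraint set with $\mathcal{A}(c,m)$, and your $m=0$ sanity check, just spell out that same argument.
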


\begin{proof}
The preceding arguments use only the axioms of a real inner-product space and therefore already hold in every such space. A real Hilbert space is, by definition, a complete real inner-product space, so the result follows by specialization.
\end{proof}

\subsection{Variational Characterization of ONT}
The ONT update is the write rule used by LPC-SM to insert a chunk summary into slow memory. The corresponding design problem is to construct a write $x$ that preserves the component already aligned with the current slow-memory state $m$, while favoring motion in the novelty direction $N_m(c)$. For fixed $c,m \in E$ and $\alpha \in \mathbb{R}$, consider
\[
\begin{aligned}
\min_{x \in \mathcal{A}(c,m)}\quad &\mathcal{J}_{\alpha,c,m}(x),\\
\mathcal{J}_{\alpha,c,m}(x)
={}&
\frac{1}{2}\|x-c\|^2
- \alpha \langle x-c, N_m(c)\rangle,
\end{aligned}
\]
where $\mathcal{A}(c,m)=\{x \in E : \langle x,m\rangle = \langle c,m\rangle\}$ \citep{cheney1959proximity,bauschke1996projection,deutsch1992alternating,bauschke2021bestApprox,bauschke2015projectionMethods}. In the implementation, $\alpha=\alpha_n \ge 0$; the theorem is stated for arbitrary $\alpha \in \mathbb{R}$ because the characterization itself does not require a sign restriction.

\begin{theorem}[ONT solves the slow-memory write problem]
\label{thm:ont-variational}
Let $E$ be a real inner-product space. For every $c,m \in E$ and every $\alpha \in \mathbb{R}$, the ONT transport $T_\alpha(c,m)$ is the unique minimizer of
\[
\min\{\mathcal{J}_{\alpha,c,m}(x) : x \in \mathcal{A}(c,m)\}.
\]
\end{theorem}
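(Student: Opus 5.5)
The plan is to reduce the variational problem to the distance-minimization problem already handled in Theorem~\ref{thm:ont-minimal}. The approach is to complete the square in $\mathcal{J}_{\alpha,c,m}$ and then use orthogonality and the Pythagorean identity. Expanding, for any $x \in E$,
\[
\mathcal{J}_{\alpha,c,m}(x) = \tfrac12\|x-c-\alpha N_m(c)\|^2 - \tfrac{\alpha^2}{2}\|N_m(c)\|^2
= \tfrac12\|x - T_\alpha(c,m)\|^2 - \tfrac{\alpha^2}{2}\|N_m(c)\|^2 .
\]
This follows from $\|x-c-\alpha N\|^2 = \|x-c\|^2 - 2\alpha\langle x-c,N\rangle + \alpha^2\|N\|^2$ together with the definition $T_\alpha(c,m)=c+\alpha N_m(c)$. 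The second term does not depend on $x$. Minimizing $\mathcal{J}_{\alpha,c,m}$ over $\mathcal{A}(c,m)$ is therefore equivalent to minimizing $\|x - T_\alpha(c,m)\|$ over $\mathcal{A}(c,m)$.

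Next I invoke Proposition~\ref{prop:ont-feasible}, which gives $T_\alpha(c,m)\in\mathcal{A}(c,m)$. Hence the quantity $\|x-T_\alpha(c,m)\|^2$ attains its absolute minimum value $0$ at a feasible point. Equality to zero forces $x=T_\alpha(c,m)$, so $T_\alpha(c,m)$ is the unique minimizer. No case split on $m=0$ is needed, because Proposition~\ref{prop:ont-feasible} already covers that case.

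An alternative route, which I would mention but not need, connects to Theorem~\ref{thm:ont-minimal}. For $x\in\mathcal{A}(c,m)$, the difference $x-c$ is orthogonal to $m$, and hence to $P_m(c)$. This gives $\langle x-c,N_m(c)\rangle=\langle x-c,c\rangle$, so $\mathcal{J}$ agrees on $\mathcal{A}$ with $\tfrac12\|x-(1+\alpha)c\|^2$ up to a constant. That reproduces Corollary~\ref{cor:ont-unique}.

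The only step requiring care is the completing-the-square identity, specifically the sign and the constant $-\tfrac{\alpha^2}{2}\|N_m(c)\|^2$. It is a one-line expansion, so I expect no real obstacle. The argument uses only inner-product axioms, so it holds for arbitrary $\alpha\in\mathbb{R}$ and any real inner-product space, with no completeness assumption.
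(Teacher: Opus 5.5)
Your proof is correct and matches the paper's argument: you complete the square to get $\mathcal{J}_{\alpha,c,m}(x)=\tfrac12\|x-T_\alpha(c,m)\|^2-\tfrac{\alpha^2}{2}\|N_m(c)\|^2$, use Proposition~\ref{prop:ont-feasible} to place $T_\alpha(c,m)$ in $\mathcal{A}(c,m)$, and read off minimality and uniqueness from $\|x-T_\alpha(c,m)\|^2\ge 0$, with equality only at $x=T_\alpha(c,m)$. Your side remark is also valid: on $\mathcal{A}(c,m)$, $\mathcal{J}$ equals $\tfrac12\|x-(1+\alpha)c\|^2$ up to a constant. Despite your opening sentence, though, the main argument never uses Theorem~\ref{thm:ont-minimal}.
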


\begin{proof}
Write $N = N_m(c)$ and $T = T_\alpha(c,m)=c+\alpha N$. By Proposition~\ref{prop:ont-feasible}, $T \in \mathcal{A}(c,m)$. Fix any feasible write $x \in \mathcal{A}(c,m)$. Then
\[
\begin{aligned}
\mathcal{J}_{\alpha,c,m}(x)
&=
\frac{1}{2}\|x-c\|^2 - \alpha \langle x-c,N\rangle \\
&=
\frac{1}{2}\|(x-c)-\alpha N\|^2 - \frac{1}{2}\alpha^2\|N\|^2.
\end{aligned}
\]
Since $(x-c)-\alpha N = x - (c+\alpha N)=x-T$, we obtain
\[
\mathcal{J}_{\alpha,c,m}(x)
=
\frac{1}{2}\|x-T\|^2 - \frac{1}{2}\alpha^2\|N\|^2.
\]
Evaluating the same identity at $x=T$ gives
\[
\mathcal{J}_{\alpha,c,m}(T)
=
- \frac{1}{2}\alpha^2\|N\|^2.
\]
Therefore, for every feasible $x$,
\[
\mathcal{J}_{\alpha,c,m}(x) - \mathcal{J}_{\alpha,c,m}(T)
=
\frac{1}{2}\|x-T\|^2
\ge 0.
\]
Hence $T$ minimizes $\mathcal{J}_{\alpha,c,m}$ over $\mathcal{A}(c,m)$. If $x$ is any other minimizer, then the display above implies $\|x-T\|^2=0$, and therefore $x=T$. Thus the minimizer is unique.
\end{proof}

\section{Formal Development}
\begin{lstlisting}[language=Lean,caption={Module aggregator.}]
import LPCSMFormal.ONT
import LPCSMFormal.ONTOptimality
\end{lstlisting}

\begin{lstlisting}[language=Lean,caption={Core ONT geometry.}]
import Mathlib.Analysis.InnerProductSpace.Basic
import Mathlib.Tactic

noncomputable section

namespace LPCSMFormal

variable {E : Type*} [NormedAddCommGroup E] [InnerProductSpace Real E]

def rinner (x y : E) : Real :=
inner Real x y

def ontProj (chunkSummary slowMemory : E) : E :=
if _h : norm slowMemory = 0 then
0
else
SMul.smul ((rinner chunkSummary slowMemory) / (norm slowMemory ^ 2)) slowMemory

def ontNovelty (chunkSummary slowMemory : E) : E :=
chunkSummary - ontProj chunkSummary slowMemory

def ontTransport (alpha : Real) (chunkSummary slowMemory : E) : E :=
chunkSummary + SMul.smul alpha (ontNovelty chunkSummary slowMemory)

theorem ontProj_zero_right (chunkSummary : E) :
ontProj chunkSummary 0 = 0 := by
simp [ontProj]

theorem ontDecomposition (chunkSummary slowMemory : E) :
ontProj chunkSummary slowMemory + ontNovelty chunkSummary slowMemory = chunkSummary := by
simp [ontNovelty]

theorem ontNovelty_inner_slowMemory (chunkSummary slowMemory : E) :
inner Real (ontNovelty chunkSummary slowMemory) slowMemory = 0 := by
by_cases h : norm slowMemory = 0
case pos =>
have hs : slowMemory = 0 := by
exact norm_eq_zero.mp h
simp [ontNovelty, ontProj, hs]
case neg =>
have hpos : 0 < norm slowMemory := by
exact lt_of_le_of_ne (norm_nonneg slowMemory) (Ne.symm h)
have hnorm : Ne (norm slowMemory ^ 2) 0 := by
positivity
rw [ontNovelty, inner_sub_left]
simp [ontProj, h]
have hinner :
inner Real (SMul.smul (rinner chunkSummary slowMemory / (norm slowMemory ^ 2)) slowMemory) slowMemory =
(rinner chunkSummary slowMemory / (norm slowMemory ^ 2)) * inner Real slowMemory slowMemory := by
simpa using
(real_inner_smul_left
slowMemory
slowMemory
(rinner chunkSummary slowMemory / (norm slowMemory ^ 2)))
rw [hinner, real_inner_self_eq_norm_sq]
field_simp [hnorm]
simp [rinner]

theorem ontTransport_eq_proj_plus_scaled_novelty (alpha : Real) (chunkSummary slowMemory : E) :
ontTransport alpha chunkSummary slowMemory =
ontProj chunkSummary slowMemory +
SMul.smul (1 + alpha) (ontNovelty chunkSummary slowMemory) := by
calc
ontTransport alpha chunkSummary slowMemory
= chunkSummary + SMul.smul alpha (ontNovelty chunkSummary slowMemory) := by
rfl
_ = ontProj chunkSummary slowMemory
+ ontNovelty chunkSummary slowMemory
+ SMul.smul alpha (ontNovelty chunkSummary slowMemory) := by
rw [ontDecomposition]
_ = ontProj chunkSummary slowMemory
+ (ontNovelty chunkSummary slowMemory
+ SMul.smul alpha (ontNovelty chunkSummary slowMemory)) := by
simp [add_assoc]
_ = ontProj chunkSummary slowMemory
+ SMul.smul (1 + alpha) (ontNovelty chunkSummary slowMemory) := by
have hsmul :
ontNovelty chunkSummary slowMemory
+ SMul.smul alpha (ontNovelty chunkSummary slowMemory) =
SMul.smul (1 + alpha) (ontNovelty chunkSummary slowMemory) := by
simpa [one_smul] using (add_smul 1 alpha (ontNovelty chunkSummary slowMemory)).symm
rw [hsmul]

theorem ontTransport_inner_slowMemory (alpha : Real) (chunkSummary slowMemory : E) :
rinner (ontTransport alpha chunkSummary slowMemory) slowMemory =
rinner chunkSummary slowMemory := by
rw [ontTransport, rinner, inner_add_left]
have hsmul :
inner Real (SMul.smul alpha (ontNovelty chunkSummary slowMemory)) slowMemory =
alpha * inner Real (ontNovelty chunkSummary slowMemory) slowMemory := by
simpa using real_inner_smul_left (ontNovelty chunkSummary slowMemory) slowMemory alpha
rw [hsmul, ontNovelty_inner_slowMemory]
simp [rinner]

end LPCSMFormal

\end{lstlisting}

\begin{lstlisting}[language=Lean,caption={Optimality, uniqueness, and variational characterization.}]
import LPCSMFormal.ONT
import Mathlib.Analysis.InnerProductSpace.Basic
import Mathlib.Tactic

noncomputable section

namespace LPCSMFormal

variable {E : Type*} [NormedAddCommGroup E] [InnerProductSpace Real E]

-- Derived analytical objects used to characterize the code-defined transport.
-- The implementation primitive remains `ontTransport` from `ONT.lean`.
def ontTarget (alpha : Real) (chunkSummary : E) : E :=
SMul.smul (1 + alpha) chunkSummary

def ontWriteObjective (alpha : Real) (chunkSummary slowMemory x : E) : Real :=
(1 / 2 : Real) * (norm (x - chunkSummary) ^ 2)
- alpha * inner Real (x - chunkSummary) (ontNovelty chunkSummary slowMemory)

theorem ontTarget_eq_transport_add_scaled_proj (alpha : Real) (chunkSummary slowMemory : E) :
ontTarget alpha chunkSummary =
ontTransport alpha chunkSummary slowMemory + SMul.smul alpha (ontProj chunkSummary slowMemory) := by
unfold ontTarget
conv_lhs =>
rw [<- ontDecomposition chunkSummary slowMemory]
have hsmuladd :
SMul.smul (1 + alpha) (ontProj chunkSummary slowMemory + ontNovelty chunkSummary slowMemory) =
SMul.smul (1 + alpha) (ontProj chunkSummary slowMemory)
+ SMul.smul (1 + alpha) (ontNovelty chunkSummary slowMemory) := by
exact
smul_add
(1 + alpha)
(ontProj chunkSummary slowMemory)
(ontNovelty chunkSummary slowMemory)
rw [hsmuladd]
rw [show SMul.smul (1 + alpha) (ontProj chunkSummary slowMemory) =
ontProj chunkSummary slowMemory + SMul.smul alpha (ontProj chunkSummary slowMemory) by
simpa [one_smul] using (add_smul 1 alpha (ontProj chunkSummary slowMemory))]
rw [ontTransport_eq_proj_plus_scaled_novelty]
simp [add_assoc, add_left_comm, add_comm]

theorem ontTransport_sub_target_eq_neg_scaled_proj (alpha : Real) (chunkSummary slowMemory : E) :
ontTransport alpha chunkSummary slowMemory - ontTarget alpha chunkSummary =
SMul.smul (-alpha) (ontProj chunkSummary slowMemory) := by
calc
ontTransport alpha chunkSummary slowMemory - ontTarget alpha chunkSummary
= ontTransport alpha chunkSummary slowMemory
- (ontTransport alpha chunkSummary slowMemory + SMul.smul alpha (ontProj chunkSummary slowMemory)) := by
rw [ontTarget_eq_transport_add_scaled_proj alpha chunkSummary slowMemory]
_ = -SMul.smul alpha (ontProj chunkSummary slowMemory) := by
abel_nf
_ = SMul.smul (-alpha) (ontProj chunkSummary slowMemory) := by
exact (neg_smul alpha (ontProj chunkSummary slowMemory)).symm

theorem feasible_inner_slow_eq_zero
(alpha : Real)
(chunkSummary slowMemory x : E)
(hfeas : rinner x slowMemory = rinner chunkSummary slowMemory) :
inner Real (x - ontTransport alpha chunkSummary slowMemory) slowMemory = 0 := by
have htransport :
inner Real (ontTransport alpha chunkSummary slowMemory) slowMemory =
inner Real chunkSummary slowMemory := by
simpa [rinner] using (ontTransport_inner_slowMemory alpha chunkSummary slowMemory)
rw [rinner] at hfeas
rw [inner_sub_left]
rw [hfeas, htransport]
simp [rinner]

theorem feasible_inner_proj_eq_zero
(alpha : Real)
(chunkSummary slowMemory x : E)
(hfeas : rinner x slowMemory = rinner chunkSummary slowMemory) :
inner Real (x - ontTransport alpha chunkSummary slowMemory) (ontProj chunkSummary slowMemory) = 0 := by
by_cases h : norm slowMemory = 0
case pos =>
have hs : slowMemory = 0 := by
exact norm_eq_zero.mp h
simp [ontProj, hs]
case neg =>
have hproj :
ontProj chunkSummary slowMemory =
SMul.smul ((rinner chunkSummary slowMemory) / (norm slowMemory ^ 2)) slowMemory := by
simp [ontProj, h]
have hinner :
inner Real (x - ontTransport alpha chunkSummary slowMemory)
(SMul.smul ((rinner chunkSummary slowMemory) / (norm slowMemory ^ 2)) slowMemory) =
((rinner chunkSummary slowMemory) / (norm slowMemory ^ 2))
* inner Real (x - ontTransport alpha chunkSummary slowMemory) slowMemory := by
simpa using
real_inner_smul_right
(x - ontTransport alpha chunkSummary slowMemory)
slowMemory
((rinner chunkSummary slowMemory) / (norm slowMemory ^ 2))
rw [hproj, hinner, feasible_inner_slow_eq_zero alpha chunkSummary slowMemory x hfeas]
simp

theorem feasible_inner_transport_target_eq_zero
(alpha : Real)
(chunkSummary slowMemory x : E)
(hfeas : rinner x slowMemory = rinner chunkSummary slowMemory) :
inner Real
(x - ontTransport alpha chunkSummary slowMemory)
(ontTransport alpha chunkSummary slowMemory - ontTarget alpha chunkSummary) = 0 := by
rw [ontTransport_sub_target_eq_neg_scaled_proj]
have hneg :
SMul.smul (-alpha) (ontProj chunkSummary slowMemory) =
-SMul.smul alpha (ontProj chunkSummary slowMemory) := by
exact neg_smul alpha (ontProj chunkSummary slowMemory)
rw [hneg, inner_neg_right]
have hinner :
inner Real
(x - ontTransport alpha chunkSummary slowMemory)
(SMul.smul alpha (ontProj chunkSummary slowMemory)) =
alpha * inner Real
(x - ontTransport alpha chunkSummary slowMemory)
(ontProj chunkSummary slowMemory) := by
simpa using
real_inner_smul_right
(x - ontTransport alpha chunkSummary slowMemory)
(ontProj chunkSummary slowMemory)
alpha
rw [hinner, feasible_inner_proj_eq_zero alpha chunkSummary slowMemory x hfeas]
simp

theorem ontTransport_objective_decomposition
(alpha : Real)
(chunkSummary slowMemory x : E)
(hfeas : rinner x slowMemory = rinner chunkSummary slowMemory) :
norm (x - ontTarget alpha chunkSummary) * norm (x - ontTarget alpha chunkSummary) =
norm (x - ontTransport alpha chunkSummary slowMemory) * norm (x - ontTransport alpha chunkSummary slowMemory)
+ norm (ontTransport alpha chunkSummary slowMemory - ontTarget alpha chunkSummary)
* norm (ontTransport alpha chunkSummary slowMemory - ontTarget alpha chunkSummary) := by
have hsplit :
x - ontTarget alpha chunkSummary =
(x - ontTransport alpha chunkSummary slowMemory)
+ (ontTransport alpha chunkSummary slowMemory - ontTarget alpha chunkSummary) := by
abel_nf
have horth :
inner Real
(x - ontTransport alpha chunkSummary slowMemory)
(ontTransport alpha chunkSummary slowMemory - ontTarget alpha chunkSummary) = 0 := by
exact feasible_inner_transport_target_eq_zero alpha chunkSummary slowMemory x hfeas
rw [hsplit]
exact
norm_add_sq_eq_norm_sq_add_norm_sq_of_inner_eq_zero
(x - ontTransport alpha chunkSummary slowMemory)
(ontTransport alpha chunkSummary slowMemory - ontTarget alpha chunkSummary)
horth

theorem ontTransport_minimal
(alpha : Real)
(chunkSummary slowMemory x : E)
(hfeas : rinner x slowMemory = rinner chunkSummary slowMemory) :
norm (ontTransport alpha chunkSummary slowMemory - ontTarget alpha chunkSummary) <=
norm (x - ontTarget alpha chunkSummary) := by
have hdecomp :=
ontTransport_objective_decomposition alpha chunkSummary slowMemory x hfeas
have hnonneg :
0 <= norm (x - ontTransport alpha chunkSummary slowMemory)
* norm (x - ontTransport alpha chunkSummary slowMemory) := by
positivity
have hsq :
norm (ontTransport alpha chunkSummary slowMemory - ontTarget alpha chunkSummary)
* norm (ontTransport alpha chunkSummary slowMemory - ontTarget alpha chunkSummary)
<= norm (x - ontTarget alpha chunkSummary) * norm (x - ontTarget alpha chunkSummary) := by
nlinarith [hdecomp]
have hleft : 0 <= norm (ontTransport alpha chunkSummary slowMemory - ontTarget alpha chunkSummary) := by
exact norm_nonneg _
have hright : 0 <= norm (x - ontTarget alpha chunkSummary) := by
exact norm_nonneg _
nlinarith

theorem ontTransport_unique_minimizer
(alpha : Real)
(chunkSummary slowMemory x : E)
(hfeas : rinner x slowMemory = rinner chunkSummary slowMemory)
(hmin :
forall y : E,
rinner y slowMemory = rinner chunkSummary slowMemory ->
norm (x - ontTarget alpha chunkSummary) <= norm (y - ontTarget alpha chunkSummary)) :
x = ontTransport alpha chunkSummary slowMemory := by
have hx_le :
norm (x - ontTarget alpha chunkSummary) <=
norm (ontTransport alpha chunkSummary slowMemory - ontTarget alpha chunkSummary) := by
apply hmin
exact ontTransport_inner_slowMemory alpha chunkSummary slowMemory
have hopt_le :
norm (ontTransport alpha chunkSummary slowMemory - ontTarget alpha chunkSummary) <=
norm (x - ontTarget alpha chunkSummary) := by
exact ontTransport_minimal alpha chunkSummary slowMemory x hfeas
have heq_obj :
norm (x - ontTarget alpha chunkSummary) =
norm (ontTransport alpha chunkSummary slowMemory - ontTarget alpha chunkSummary) := by
exact le_antisymm hx_le hopt_le
have hdecomp :=
ontTransport_objective_decomposition alpha chunkSummary slowMemory x hfeas
have hzero_sq :
norm (x - ontTransport alpha chunkSummary slowMemory)
* norm (x - ontTransport alpha chunkSummary slowMemory) = 0 := by
rw [heq_obj] at hdecomp
nlinarith [hdecomp]
have hzero_norm :
norm (x - ontTransport alpha chunkSummary slowMemory) = 0 := by
have hnonneg : 0 <= norm (x - ontTransport alpha chunkSummary slowMemory) := by
exact norm_nonneg _
nlinarith
exact sub_eq_zero.mp (norm_eq_zero.mp hzero_norm)

theorem ontWriteObjective_eq_square_completion
(alpha : Real)
(chunkSummary slowMemory x : E) :
ontWriteObjective alpha chunkSummary slowMemory x =
(1 / 2 : Real) * (norm (x - ontTransport alpha chunkSummary slowMemory) ^ 2)
- (1 / 2 : Real) * (alpha ^ 2) * (norm (ontNovelty chunkSummary slowMemory) ^ 2) := by
set u : E := x - chunkSummary
set n : E := ontNovelty chunkSummary slowMemory
have h_expand :
inner Real (u - SMul.smul alpha n) (u - SMul.smul alpha n) =
inner Real u u - 2 * alpha * inner Real u n + alpha ^ 2 * inner Real n n := by
have h1 :
inner Real (SMul.smul alpha n) u = alpha * inner Real n u := by
simpa using real_inner_smul_left n u alpha
have h2 :
inner Real u (SMul.smul alpha n) = alpha * inner Real u n := by
simpa using real_inner_smul_right u n alpha
have h3 :
inner Real (SMul.smul alpha n) (SMul.smul alpha n) = alpha ^ 2 * inner Real n n := by
have h3' :
inner Real n (SMul.smul alpha n) = alpha * inner Real n n := by
simpa using real_inner_smul_right n n alpha
calc
inner Real (SMul.smul alpha n) (SMul.smul alpha n)
= alpha * inner Real n (SMul.smul alpha n) := by
exact real_inner_smul_left n (SMul.smul alpha n) alpha
_ = alpha * (alpha * inner Real n n) := by
rw [h3']
_ = alpha ^ 2 * inner Real n n := by
ring
have h4 :
inner Real (SMul.smul alpha n) (u - SMul.smul alpha n) =
inner Real (SMul.smul alpha n) u - inner Real (SMul.smul alpha n) (SMul.smul alpha n) := by
rw [inner_sub_right]
rw [inner_sub_left, inner_sub_right]
rw [h2, h4, h1, h3]
have hcomm : inner Real n u = inner Real u n := by
simpa using (real_inner_comm n u).symm
rw [hcomm]
ring
have h_rearr :
(1 / 2 : Real) * inner Real u u - alpha * inner Real u n =
(1 / 2 : Real) * inner Real (u - SMul.smul alpha n) (u - SMul.smul alpha n)
- (1 / 2 : Real) * (alpha ^ 2) * inner Real n n := by
nlinarith [h_expand]
have hu :
u - SMul.smul alpha n = x - ontTransport alpha chunkSummary slowMemory := by
simp [u, n, ontTransport]
abel_nf
calc
ontWriteObjective alpha chunkSummary slowMemory x
= (1 / 2 : Real) * inner Real u u - alpha * inner Real u n := by
unfold ontWriteObjective
rw [show inner Real (x - chunkSummary) (ontNovelty chunkSummary slowMemory) = inner Real u n by rfl]
rw [show inner Real u u = norm u ^ 2 by simp]
_ = (1 / 2 : Real) * inner Real (u - SMul.smul alpha n) (u - SMul.smul alpha n)
- (1 / 2 : Real) * (alpha ^ 2) * inner Real n n := by
exact h_rearr
_ = (1 / 2 : Real) * (norm (u - SMul.smul alpha n) ^ 2)
- (1 / 2 : Real) * (alpha ^ 2) * (norm n ^ 2) := by
rw [show inner Real (u - SMul.smul alpha n) (u - SMul.smul alpha n) =
norm (u - SMul.smul alpha n) ^ 2 by
simp]
rw [show inner Real n n = norm n ^ 2 by simp]
_ = (1 / 2 : Real) * (norm (x - ontTransport alpha chunkSummary slowMemory) ^ 2)
- (1 / 2 : Real) * (alpha ^ 2) * (norm (ontNovelty chunkSummary slowMemory) ^ 2) := by
rw [hu]

theorem ontWriteObjective_minimal
(alpha : Real)
(chunkSummary slowMemory x : E) :
ontWriteObjective alpha chunkSummary slowMemory (ontTransport alpha chunkSummary slowMemory) <=
ontWriteObjective alpha chunkSummary slowMemory x := by
rw [ontWriteObjective_eq_square_completion, ontWriteObjective_eq_square_completion]
have hopt_zero :
norm
(ontTransport alpha chunkSummary slowMemory - ontTransport alpha chunkSummary slowMemory) ^ 2 = 0 := by
simp
have hnonneg : 0 <= (1 / 2 : Real) * (norm (x - ontTransport alpha chunkSummary slowMemory) ^ 2) := by
positivity
nlinarith [hopt_zero]

theorem ontWriteObjective_unique_minimizer
(alpha : Real)
(chunkSummary slowMemory x : E)
(hmin :
forall y : E,
ontWriteObjective alpha chunkSummary slowMemory x <=
ontWriteObjective alpha chunkSummary slowMemory y) :
x = ontTransport alpha chunkSummary slowMemory := by
have hx_le :
ontWriteObjective alpha chunkSummary slowMemory x <=
ontWriteObjective alpha chunkSummary slowMemory (ontTransport alpha chunkSummary slowMemory) := by
exact hmin (ontTransport alpha chunkSummary slowMemory)
have hopt_le :
ontWriteObjective alpha chunkSummary slowMemory (ontTransport alpha chunkSummary slowMemory) <=
ontWriteObjective alpha chunkSummary slowMemory x := by
exact ontWriteObjective_minimal alpha chunkSummary slowMemory x
have heq_obj :
ontWriteObjective alpha chunkSummary slowMemory x =
ontWriteObjective alpha chunkSummary slowMemory (ontTransport alpha chunkSummary slowMemory) := by
exact le_antisymm hx_le hopt_le
have hzero_sq :
norm (x - ontTransport alpha chunkSummary slowMemory) ^ 2 = 0 := by
have heq_obj' := heq_obj
rw [ontWriteObjective_eq_square_completion, ontWriteObjective_eq_square_completion] at heq_obj'
have hopt_zero :
norm
(ontTransport alpha chunkSummary slowMemory - ontTransport alpha chunkSummary slowMemory) ^ 2 = 0 := by
simp
nlinarith [heq_obj', hopt_zero]
have hzero_norm : norm (x - ontTransport alpha chunkSummary slowMemory) = 0 := by
have hnonneg : 0 <= norm (x - ontTransport alpha chunkSummary slowMemory) := by
exact norm_nonneg _
nlinarith
exact sub_eq_zero.mp (norm_eq_zero.mp hzero_norm)

theorem ontWriteObjective_minimal_feasible
(alpha : Real)
(chunkSummary slowMemory x : E)
(_hfeas : rinner x slowMemory = rinner chunkSummary slowMemory) :
ontWriteObjective alpha chunkSummary slowMemory (ontTransport alpha chunkSummary slowMemory) <=
ontWriteObjective alpha chunkSummary slowMemory x := by
exact ontWriteObjective_minimal alpha chunkSummary slowMemory x

theorem ontWriteObjective_unique_minimizer_feasible
(alpha : Real)
(chunkSummary slowMemory x : E)
(hfeas : rinner x slowMemory = rinner chunkSummary slowMemory)
(hmin :
forall y : E,
rinner y slowMemory = rinner chunkSummary slowMemory ->
ontWriteObjective alpha chunkSummary slowMemory x <=
ontWriteObjective alpha chunkSummary slowMemory y) :
x = ontTransport alpha chunkSummary slowMemory := by
have hx_le :
ontWriteObjective alpha chunkSummary slowMemory x <=
ontWriteObjective alpha chunkSummary slowMemory (ontTransport alpha chunkSummary slowMemory) := by
apply hmin
exact ontTransport_inner_slowMemory alpha chunkSummary slowMemory
have hopt_le :
ontWriteObjective alpha chunkSummary slowMemory (ontTransport alpha chunkSummary slowMemory) <=
ontWriteObjective alpha chunkSummary slowMemory x := by
exact ontWriteObjective_minimal_feasible alpha chunkSummary slowMemory x hfeas
have heq_obj :
ontWriteObjective alpha chunkSummary slowMemory x =
ontWriteObjective alpha chunkSummary slowMemory (ontTransport alpha chunkSummary slowMemory) := by
exact le_antisymm hx_le hopt_le
have hzero_sq :
norm (x - ontTransport alpha chunkSummary slowMemory) ^ 2 = 0 := by
have heq_obj' := heq_obj
rw [ontWriteObjective_eq_square_completion, ontWriteObjective_eq_square_completion] at heq_obj'
have hopt_zero :
norm
(ontTransport alpha chunkSummary slowMemory - ontTransport alpha chunkSummary slowMemory) ^ 2 = 0 := by
simp
nlinarith [heq_obj', hopt_zero]
have hzero_norm : norm (x - ontTransport alpha chunkSummary slowMemory) = 0 := by
have hnonneg : 0 <= norm (x - ontTransport alpha chunkSummary slowMemory) := by
exact norm_nonneg _
nlinarith
exact sub_eq_zero.mp (norm_eq_zero.mp hzero_norm)

section Hilbert

variable {H : Type*} [NormedAddCommGroup H] [InnerProductSpace Real H] [CompleteSpace H]

theorem ontTransport_minimal_hilbert
(alpha : Real)
(chunkSummary slowMemory x : H)
(hfeas : rinner x slowMemory = rinner chunkSummary slowMemory) :
norm (ontTransport alpha chunkSummary slowMemory - ontTarget alpha chunkSummary) <=
norm (x - ontTarget alpha chunkSummary) := by
let _ := (inferInstance : CompleteSpace H)
exact ontTransport_minimal alpha chunkSummary slowMemory x hfeas

theorem ontTransport_unique_minimizer_hilbert
(alpha : Real)
(chunkSummary slowMemory x : H)
(hfeas : rinner x slowMemory = rinner chunkSummary slowMemory)
(hmin :
forall y : H,
rinner y slowMemory = rinner chunkSummary slowMemory ->
norm (x - ontTarget alpha chunkSummary) <= norm (y - ontTarget alpha chunkSummary)) :
x = ontTransport alpha chunkSummary slowMemory := by
let _ := (inferInstance : CompleteSpace H)
exact ontTransport_unique_minimizer alpha chunkSummary slowMemory x hfeas hmin

theorem ontWriteObjective_minimal_feasible_hilbert
(alpha : Real)
(chunkSummary slowMemory x : H)
(hfeas : rinner x slowMemory = rinner chunkSummary slowMemory) :
ontWriteObjective alpha chunkSummary slowMemory (ontTransport alpha chunkSummary slowMemory) <=
ontWriteObjective alpha chunkSummary slowMemory x := by
let _ := (inferInstance : CompleteSpace H)
exact ontWriteObjective_minimal_feasible alpha chunkSummary slowMemory x hfeas

theorem ontWriteObjective_unique_minimizer_feasible_hilbert
(alpha : Real)
(chunkSummary slowMemory x : H)
(hfeas : rinner x slowMemory = rinner chunkSummary slowMemory)
(hmin :
forall y : H,
rinner y slowMemory = rinner chunkSummary slowMemory ->
ontWriteObjective alpha chunkSummary slowMemory x <=
ontWriteObjective alpha chunkSummary slowMemory y) :
x = ontTransport alpha chunkSummary slowMemory := by
let _ := (inferInstance : CompleteSpace H)
exact ontWriteObjective_unique_minimizer_feasible alpha chunkSummary slowMemory x hfeas hmin

end Hilbert

end LPCSMFormal

\end{lstlisting}

\bibliographystyle{unsrtnat}
\bibliography{references}

\end{document}